\documentclass[conference]{IEEEtran}
\IEEEoverridecommandlockouts
\usepackage{cite}
\usepackage{amsmath,amssymb,amsfonts}
\usepackage{amsthm}
\newtheorem{theorem}{Theorem}

\usepackage{graphicx}
\usepackage{booktabs}
\usepackage{textcomp}
\usepackage{xcolor}
\usepackage{url}

\usepackage{hyperref}
\graphicspath{{figures/}{../figures/}}
\hypersetup{hidelinks}
\title{M-CTX: Exact and Scalable Spatial Context\\Retrieval for Trajectory Analytics}
\author{
\IEEEauthorblockN{
Kun Ma\IEEEauthorrefmark{1}\IEEEauthorrefmark{2},
Qilong Han\IEEEauthorrefmark{1},
Chengjing Song\IEEEauthorrefmark{1},
Jingzheng Yao\IEEEauthorrefmark{1},
Xiao Han\IEEEauthorrefmark{1},
Yuee Zhou\IEEEauthorrefmark{3}\IEEEauthorrefmark{2},
Changmao Wu\IEEEauthorrefmark{4}
}
\IEEEauthorblockA{\IEEEauthorrefmark{1}
Harbin Engineering University, Harbin, China\\
}
\IEEEauthorblockA{\IEEEauthorrefmark{2}
Politecnico di Torino, Turin, Italy}
\IEEEauthorblockA{\IEEEauthorrefmark{3}
Northeastern University, Shenyang, China\\
}
\IEEEauthorblockA{\IEEEauthorrefmark{4}
Chinese Academy of Sciences, Beijing, China\\
}
}

\begin{document}
\maketitle

\begin{abstract} Modern trajectory predictors increasingly condition on external spatial context, such as map geometry, signed distance fields (SDFs), and nearby moving agents. While this context improves prediction quality, constructing it for every training anchor has become a hidden systems bottleneck. In a representative maritime AIS pipeline, spatial context construction requires roughly $17$ CPU-days for a $5.48$M-anchor corpus, dominating the cost of the downstream predictor. We present \textbf{M-CTX}, an exact and scalable spatial context-retrieval framework for trajectory analytics. M-CTX recasts context construction as an ingest-once, query-many spatial database workload and replaces three brute-force stages---OSM range retrieval, SDF computation, and moving-vessel neighbour lookup---with composable, index-backed operators. Its learned range-index backend, BR-LZ, provides recall-complete MBR-overlap range retrieval and reduces candidate amplification by $1.1\times$--$2.7\times$ relative to global-expansion one-curve baselines. Across four maritime regions, eight baseline systems, synthetic workloads with up to $40$M spatial features, and $10^7$-record AIS streams, M-CTX reproduces the reference context exactly. On the $5.48$M-anchor corpus, it reduces context construction from about $17$ CPU-days to $1.8$ hours, a measured $226\times$ end-to-end speed-up. An optional storage mode further compresses SDF context by $64\times$ with only a $0.04$\,m ADE change. These results establish exact spatial context retrieval as a first-class database problem in modern trajectory analytics. Code and datasets are publicly available at https://github.com/mark000071/M-CTX-Traj. \end{abstract}

\begin{IEEEkeywords}
spatial context retrieval, spatial indexing, learned index, moving-object
index, signed distance field, AIS, trajectory analytics
\end{IEEEkeywords}

\section{Introduction}\label{sec:intro}

Trajectory prediction is increasingly context-aware. In autonomous
driving, urban mobility, and maritime navigation, plausible future motion
depends not only on an agent's recent trajectory, but also on surrounding
map geometry, physical constraints, and nearby agents. We study this shift
in maritime traffic, where public Automatic Identification System (AIS)
feeds provide one of the largest open sources of moving-object data.
Recent maritime predictors increasingly condition on coastlines, channels,
signed distance fields (SDFs), and neighbouring vessels. These signals improve prediction quality, but
they also introduce a new requirement: spatial context must be retrieved
and materialised for every training sample.

This context construction is often treated as routine preprocessing, but
in practice it dominates the pipeline. For each AIS anchor, the reference
pipeline performs three operations: (i) it retrieves OpenStreetMap (OSM)
features overlapping a $5$\,km window; (ii) it rasterises the local map
patch onto a $128\times128$ grid and computes two SDFs; and (iii) it scans
a vessel snapshot to find neighbours within $3$\,km. Profiling on a
$1{,}000$-anchor subset (Section~\ref{sec:problem}) shows that SDF
construction alone accounts for about half of the wall-clock time, mostly
due to a quadratic distance transform. Brute-force neighbour scanning adds
another quarter. Extrapolated to the full $5.48$M-anchor corpus, context
construction requires roughly $17$ CPU-days before model training begins.
The predictor is not the bottleneck; the context pipeline is. Figure~\ref{fig:intro} summarises this motivation and the database
reformulation behind M-CTX.
\begin{figure}[t]
\centering
\includegraphics[width=\columnwidth]{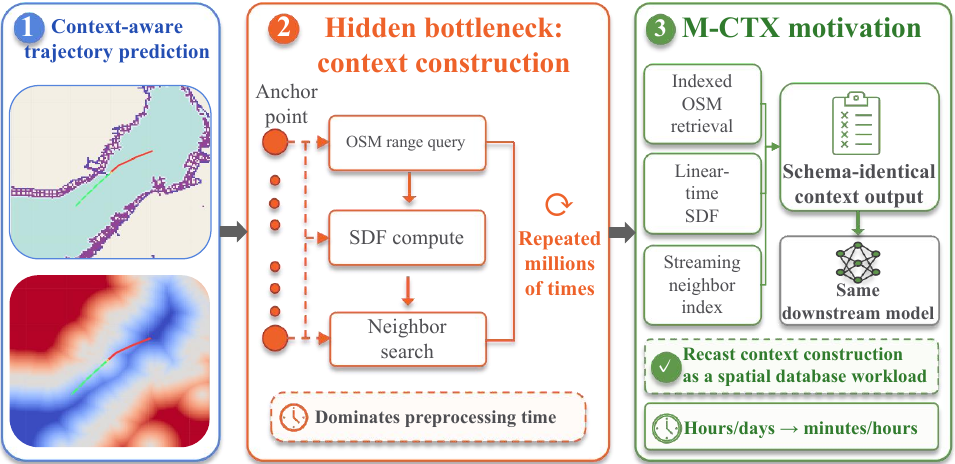}
\caption{M-CTX recasts per-anchor spatial context construction as an
exact, index-backed database workload.}
\label{fig:intro}
\end{figure}
Our key observation is that this bottleneck is a spatial-database workload
implemented without spatial data management. OSM retrieval is a static
range-query problem over map features, the canonical workload of spatial
indexes such as R-trees~\cite{guttman1984rtree}. Neighbour retrieval is a
streaming moving-object query, matching the setting of indexes such as the
TPR-tree and B$^x$-tree~\cite{saltenis2000tpr,jensen2004bx}. SDF
construction repeatedly computes exact distance transforms over local
rasters, for which separable linear-time algorithms are known
~\cite{felzenszwalb2004distance}. Although these operations appear in a
learning pipeline, their common structure is database-centric: millions of
anchors repeatedly request exact spatial context from largely reusable
static and streaming data.

We present \textbf{M-CTX}, an exact and scalable spatial context-retrieval
framework for trajectory analytics. M-CTX replaces the three brute-force
stages with composable, index-backed operators: an exact OSM range
retriever, a linear-time SDF engine, and an incremental moving-object
neighbour index. The framework preserves the schema and semantics of the
reference context, allowing downstream models to consume the generated
tensors without code changes or retraining. Its learned range-index
backend, BR-LZ, uses bounded residuals and a segment-local extent to
provide recall-complete MBR-overlap range retrieval while reducing
candidate amplification relative to global expansion strategies.

This paper makes the following contributions.

\begin{itemize}
\item \textbf{A spatial context-retrieval workload.}
We formalise context construction for trajectory learning as an
ingest-once, query-many spatial database workload and show through
profiling that it dominates a representative AIS learning pipeline.

\item \textbf{BR-LZ, a recall-complete learned range index.}
We introduce a bounded-residual learned $Z$-order index for MBR-overlap
range queries. BR-LZ provides an explicit recall-completeness guarantee
and reduces candidate amplification by $1.1\times$--$2.7\times$ over a
global expansion strategy at the same recall.

\item \textbf{Exact SDF acceleration and storage co-design.}
We replace the quadratic SDF kernel with a linear-time exact distance
transform, accelerating the SDF stage by $163\times$. We further show that
SDF context can be compressed by $64\times$ with only a $0.04$\,m ADE
change.

\item \textbf{An end-to-end exact context framework.}
Across four maritime regions, eight baseline systems, synthetic scaling to
$40$M map features, and $10^7$ streaming AIS records, M-CTX preserves the
reference context semantics. On the $5.48$M-anchor corpus, it reduces
context construction from about $17$ CPU-days to $1.8$ hours, a measured
$226\times$ end-to-end speed-up, while preserving downstream predictions
for the deterministic LSTM+Env-SDF core.
\end{itemize}

M-CTX shows that the retrieval workloads hidden inside modern
spatiotemporal learning pipelines are first-class database problems. By
combining exact spatial indexing, linear-time context materialisation, and
drop-in compatibility with downstream predictors, it removes the dominant
data bottleneck without changing the learning model itself.

\section{Related Work}\label{sec:related}

\paragraph*{Spatial indexing and geospatial data systems}
Spatial indexing is a core topic in database systems. The
R-tree~\cite{guttman1984rtree} and its variants, including the
R$^{+}$-tree~\cite{sellis1987rplus}, Hilbert
R-tree~\cite{kamel1994hilbert}, and R$^{*}$-tree~\cite{beckmann1990rstar},
remain standard access methods for rectangle and region queries. Main-memory
structures such as kd-trees~\cite{bentley1975kdtree} and
quadtrees~\cite{finkel1974quadtree} provide complementary design points,
while GiST~\cite{hellerstein1995gist} abstracts many spatial access
methods behind a common interface. For static, read-mostly workloads,
bulk loading is often decisive: the Sort-Tile-Recursive algorithm
~\cite{leutenegger1997str} produces compact R-tree layouts and is widely
used in practice. More recent work has revisited spatial indexing under
dynamic and workload-aware settings, such as Waffle~\cite{moti2022waffle},
which adapts disk-based spatial indexing to query/update trade-offs.
Large-scale geospatial systems, including SpatialHadoop
~\cite{eldawy2015spatialhadoop}, GeoSpark/Sedona~\cite{yu2015geospark},
and RDPro~\cite{shang2024rdpro}, target distributed vector or raster
analytics. M-CTX is complementary: it targets a lighter but repeated
training-loop workload, where the same static map and streaming AIS state
are queried millions of times to materialise exact tensors for downstream
models.

\paragraph*{Learned and workload-aware indexes}
Learned indexes replace parts of a comparison-based access method with a
model of the key distribution~\cite{kraska2018case}. One-dimensional
learned indexes, including FITing-Tree~\cite{galakatos2019fiting},
PGM-index~\cite{ferragina2020pgm}, RadixSpline~\cite{kipf2020radixspline},
ALEX~\cite{ding2020alex}, and SALI~\cite{ge2023sali}, have shown that
distribution-aware models can reduce lookup cost or memory footprint under
suitable workloads. Recent evaluations and surveys further show that
learned indexes are not universally superior; their effectiveness depends
on workload, update pattern, model error, and implementation cost
~\cite{sun2023learned_eval,liu2025multidim_eval,almamun2025survey}.
Multi-dimensional learned indexes extend this idea by either projecting
objects onto a space-filling curve~\cite{wang2019learnedz}, partitioning
space with local models as in LISA~\cite{li2020lisa}, learning recursive
partitions as in RSMI~\cite{qi2020rsmi}, or jointly optimising layout and
partitioning as in Flood~\cite{nathan2020flood},
Tsunami~\cite{ding2020tsunami}, Qd-tree~\cite{yang2020qdtree}, and
LMSFC~\cite{gao2022lmsfc}.

M-CTX differs from this line of work in its correctness requirement.
Context construction requires recall-complete MBR-overlap retrieval: a
feature whose MBR overlaps the query window must be returned even if its
centroid lies outside the window. Centroid-based one-curve learned indexes
can miss such features unless the query is expanded. A global maximum
extent restores recall but produces loose candidate sets. BR-LZ addresses
this gap directly by combining bounded residuals with segment-local
extents, providing recall-complete MBR-overlap retrieval while reducing
candidate amplification relative to global expansion.

\paragraph*{Moving-object indexing and streaming neighbour retrieval}
The neighbour-retrieval stage is related to moving-object indexing. The
TPR-tree~\cite{saltenis2000tpr} indexes time-parameterised MBRs and
supports predictive queries over moving objects. The B$^{x}$-tree
~\cite{jensen2004bx} maps time-partitioned positions onto a space-filling
curve and indexes them with a B$^{+}$-tree, avoiding full snapshot rebuilds
under frequent updates. This property is well matched to AIS streams, where
positions arrive at a regular cadence and neighbour queries are issued
against rolling snapshots. M-CTX adopts a B$^{x}$-tree-style backend with
an exact radius filter after key-range enumeration. The key difference from
snapshot KD-tree baselines is update granularity: M-CTX supports interleaved
inserts and queries, which is required for live context construction.

\paragraph*{Distance transforms and spatial context materialisation}
Signed distance fields are widely used to encode geometric constraints such
as map boundaries, navigable areas, and object surfaces. Exact Euclidean
distance transforms can be computed in linear time by the
Felzenszwalb--Huttenlocher algorithm~\cite{felzenszwalb2004distance};
classical alternatives such as the Maurer transform~\cite{maurer2003edt}
share the same $\Theta(HW)$ bound. Neural signed-distance fields such as
DeepSDF~\cite{park2019deepsdf} learn implicit shape representations, while
geospatial systems such as RDPro~\cite{shang2024rdpro} focus on scalable
raster analytics. M-CTX does not learn the SDF and does not target general
raster analytics. Instead, it replaces the quadratic per-anchor SDF kernel
in a trajectory-learning pipeline with an exact linear-time transform, then
co-designs the stored representation with downstream accuracy.

\paragraph*{Context-aware trajectory prediction}
Recent trajectory predictors increasingly condition on static scene context
and dynamic agent interactions. In autonomous driving, transformer-based
models such as MTR~\cite{shi2022mtr}, Wayformer~\cite{nayakanti2023wayformer},
QCNet~\cite{zhou2023qcnet}, and Forecast-MAE~\cite{cheng2023forecastmae}
explicitly encode road topology, map elements, neighbouring agents, or
masked lane/trajectory context. Earlier general-purpose predictors such as
Trajectron++~\cite{salzmann2020trajectron} and AgentFormer
~\cite{yuan2021agentformer} similarly established the importance of social
and environmental context. Maritime prediction has followed the same trend:
AIS-based models include recurrent and transformer predictors
~\cite{nguyen2021traisformer,zhao2021bilstm,qiang2023mstformer}, graph
models for vessel interaction and traffic flow
~\cite{murray2021ais,liang2022finegrained}, and broader surveys of maritime
trajectory analytics~\cite{tu2018ais,zhang2022survey}. M-CTX is orthogonal to these predictors: it does not
propose a new forecasting model, but accelerates the exact context-retrieval
pipeline on which such models increasingly depend.

\paragraph*{Positioning}
The main distinction of M-CTX is not simply learned versus classical
indexing. Classical spatial indexes provide exact retrieval but may be
heavier to build, store, or embed inside lightweight training pipelines.
Learned spatial indexes improve footprint or build time, but for
MBR-overlap context retrieval they typically rely on empirical recall or
global expansion. M-CTX asks whether a learned spatial index can retain the
deployment advantages of learned indexing while restoring the exactness
required by model-facing retrieval. BR-LZ answers this question for static
OSM range retrieval, and M-CTX combines it with exact SDF materialisation
and streaming neighbour indexing to form an end-to-end, schema-preserving
context retrieval system for trajectory analytics.

\section{Problem Formulation and Workload Characterization}
\label{sec:problem}

\subsection{Context retrieval as a workload}
Context-aware trajectory prediction augments each AIS \emph{anchor}
$a=(\mathrm{lat},\mathrm{lon},t)$ with local spatial context before it is
fed to a predictor. Let $F=\{f_1,\ldots,f_N\}$ be a static collection of
OSM features, where each feature has a two-dimensional MBR and a semantic
tag, and let $S$ be a stream of timestamped vessel positions. For each
anchor $a$, the context pipeline computes
\begin{equation}
\mathcal{C}(a)=
\bigl(
\underbrace{R_F(a,r_1)}_{\text{OSM range}},
\underbrace{D(a)}_{\text{SDF}},
\underbrace{K_S(a,r_2,k)}_{\text{$k$NN}}
\bigr),
\end{equation}
where
$R_F(a,r_1)=\{f\in F: f.\mathrm{MBR}\cap B(a,r_1)\neq\emptyset\}$
returns OSM features intersecting the $r_1=5$\,km query window,
$D(a)\in\mathbb{R}^{H\times W\times 2}$ is a pair of
$128\times128$ signed distance fields over the local patch, and
$K_S(a,r_2,k)$ returns the $k$ nearest vessels within $r_2=3$\,km in the
corresponding snapshot. Since a training corpus evaluates
$\mathcal{C}$ over millions of anchors, the workload is
\emph{ingest-once, query-many}: static map data and historical AIS streams
are prepared once and queried repeatedly.

Figure~\ref{fig:ais} illustrates the spatial skew of AIS traffic. Vessel
positions concentrate along narrow shipping lanes, making neighbour
retrieval both dense and non-uniform---a poor fit for repeated full-snapshot
scans and a natural fit for incremental moving-object indexing.

\begin{figure}[t]
\centering
\includegraphics[width=0.84\columnwidth]{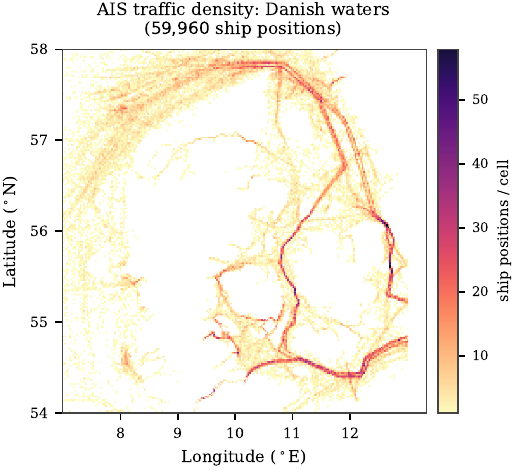}
\caption{AIS traffic density over Danish waters. The lane structure makes
per-anchor neighbour retrieval dense and highly skewed.}
\label{fig:ais}
\end{figure}

\subsection{Profiling the reference pipeline}
Table~\ref{tab:profile} profiles the reference context pipeline on a
$1{,}000$-anchor subset. The retrieval stages dominate the preprocessing
budget. SDF construction is the largest component, taking
$176.4$\,ms per anchor, with most of its time spent in a quadratic
distance-transform kernel. Brute-force neighbour scanning adds another
$88.2$\,ms per anchor. Extrapolated to the full $5.48$M-anchor corpus,
context construction requires approximately $17$ CPU-days before model
training begins. In this pipeline, the predictor is not the bottleneck;
context materialisation is.

\begin{table}[t]
\centering
\caption{Per-stage cost of the reference context pipeline on a
$1{,}000$-anchor subset using a single CPU thread.}
\label{tab:profile}
\small
\setlength{\tabcolsep}{3.5pt}
\begin{tabular}{lrrl}
\toprule
Stage & wall-clock & per-anchor & dominant cost \\
\midrule
OSM load + parse       & $85.2$\,s  & $85.2$\,ms  & cold file I/O \\
SDF compute            & $176.4$\,s & $176.4$\,ms & quadratic EDT \\
Neighbour scan         & $88.2$\,s  & $88.2$\,ms  & full-snapshot scan \\
\bottomrule
\end{tabular}
\setlength{\tabcolsep}{6pt}
\end{table}

\subsection{Why brute force is the wrong abstraction}
The reference implementation treats each context component as an
independent preprocessing kernel, but all three expose reusable spatial
structure. OSM retrieval is a static range query over map features, yet
the reference path reloads and parses tiles per anchor. SDF construction
runs an $O(HW\cdot M)$ nearest-pixel loop even though the exact distance
field can be computed in $O(HW)$. Neighbour retrieval scans the current
snapshot for every anchor instead of indexing moving points. The common
failure mode is therefore not a slow model, but the absence of spatial
data management in a workload that repeatedly asks for exact local context.

This observation leads to the central question of the paper: how can we
replace brute-force context construction with indexed and asymptotically
better operators while preserving the exact tensors consumed by the
downstream model? M-CTX answers this question by recasting context
construction as a spatial database workload embedded inside a learning
pipeline.

\subsection{Design goals and metric}
\label{ssec:goals}
An indexed context backend must satisfy three requirements.

\textbf{Correctness.}
Each operator must reproduce the brute-force result. For OSM range
retrieval, this means provable recall for MBR-overlap queries rather than
empirical high recall.

\textbf{Amortised efficiency.}
The relevant cost is not query latency alone, but the end-to-end
ingest-once, query-many cost over the full corpus, including build time,
query time, and index footprint.

\textbf{Drop-in equivalence.}
The emitted context must preserve the reference schema and numerical
semantics, so downstream predictors can run without code changes or
retraining.

For the OSM range stage, we additionally measure candidate amplification:
\begin{equation}
\alpha(q)=
\frac{
|\{\text{features scanned by the exact filter for } q\}|
}{
|R_F(q)|
},
\end{equation}
defined for queries with $|R_F(q)|>0$. It measures how many exact MBR
tests are performed per true hit after the index has generated its
candidate set. Any recall-complete index has $\alpha(q)\geq1$; reducing
$\alpha$ at recall $1.0$ is therefore the main efficiency lever for
learned range retrieval. Section~\ref{ssec:extent} shows how BR-LZ reduces
this amplification using segment-local extents.

\section{The M-CTX Framework}
\label{sec:overview}

\begin{figure*}[t]
\centering
\includegraphics[width=\textwidth]{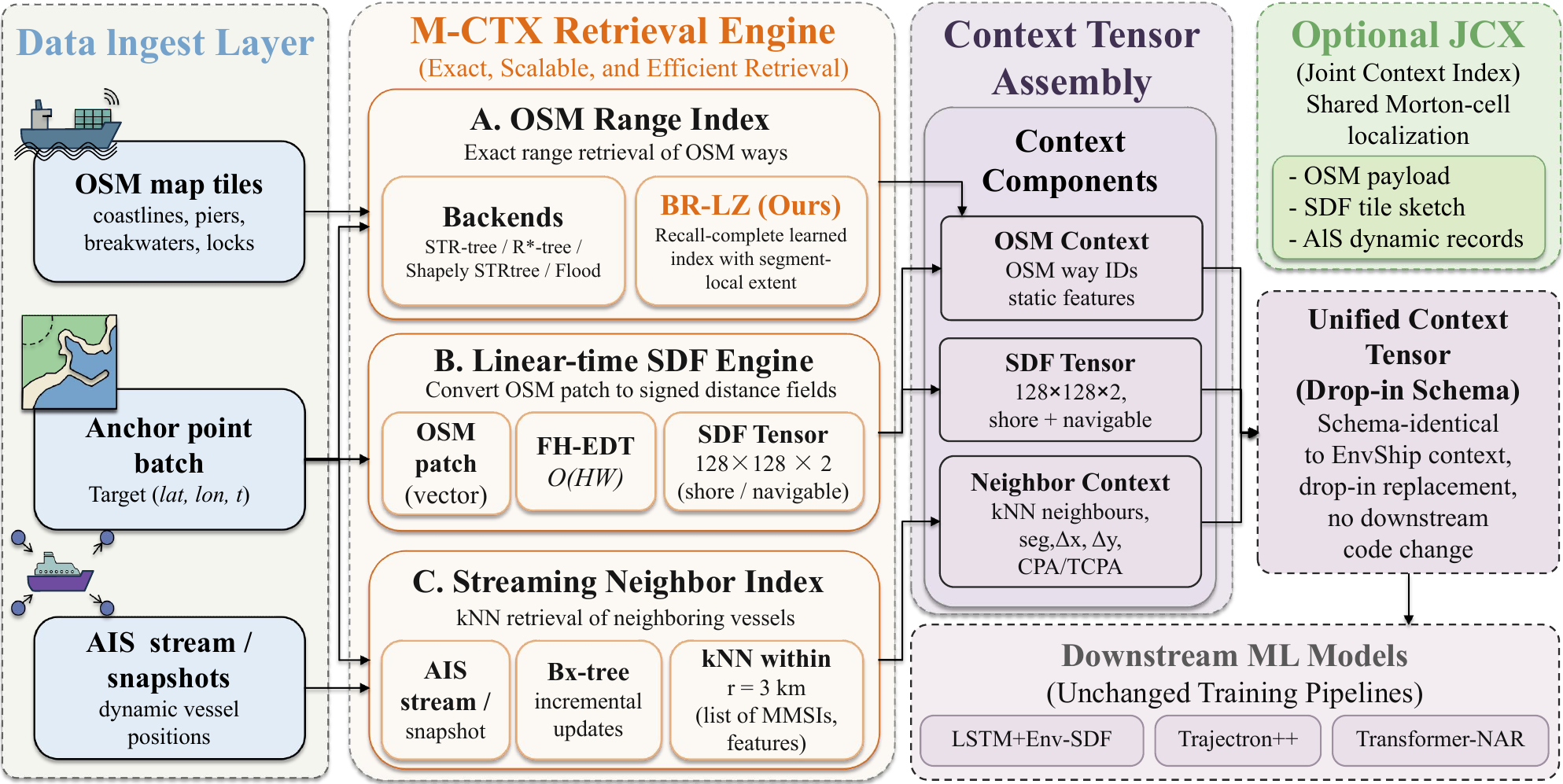}
\caption{M-CTX architecture. An anchor batch is dispatched to three
composable operators: exact OSM range retrieval, linear-time SDF
materialisation, and streaming neighbour lookup. The output follows the
reference context schema, so downstream models run unchanged.}
\label{fig:pipeline}
\end{figure*}

M-CTX implements the workload $\mathcal{C}$ as a composition of three
specialised context operators behind a single interface. The
\textbf{OSM-Index} answers $R_F$ using interchangeable exact backends,
including a pure-NumPy STR-tree, a production R$^{*}$-tree, and the BR-LZ
learned index. The \textbf{SDF engine} computes $D$ with a linear-time
exact distance transform. The \textbf{Neighbour-Index} answers $K_S$ using
an incremental B$^{x}$-tree over streaming AIS positions. Since the OSM
range query depends only on location, the SDF depends on the local map
patch, and neighbour retrieval depends on spatiotemporal state, the three
operators can be executed independently across anchors. Figure~\ref{fig:pipeline} shows the end-to-end M-CTX architecture and how
the three operators are assembled into a schema-identical context output.

M-CTX is invoked through one entry point:
\begin{verbatim}
mctx.build_context(
  anchors, osm_index,
  sdf_engine, neighbor_index)
\end{verbatim}
It returns a context dictionary whose schema matches the reference output.
This drop-in property is a core design requirement: each brute-force stage
can be replaced by an indexed operator while preserving the tensors passed
to the downstream predictor. As a result, the system improves context
construction without changing the learning model or its training code.

The framework is also naturally parallel. The implementation vectorises
operator calls over anchor batches and partitions anchors across workers
with a \texttt{mapPartitions}-style driver. Each worker uses a
broadcast-loaded copy of the static indexes, avoiding repeated rebuilds
when scaling to more processes or machines. This execution model preserves
the same operator composition from single-core experiments to multi-worker
corpus construction.

\section{Static Spatial Range Retrieval with BR-LZ}
\label{sec:static-range}
\label{sec:osm}
\label{sec:brlz}

The OSM stage answers $R_F(a,5\text{\,km})$ over a static collection of
vector map features, including coastlines, breakwaters, piers, and locks.
This is a read-mostly MBR-overlap range-retrieval workload: the map is
ingested once, while millions of anchor-centred windows repeatedly query
the same feature collection. M-CTX exposes three interchangeable exact
backends behind one interface. A pure-NumPy STR-tree
~\cite{leutenegger1997str} provides a dependency-free baseline; a
libspatialindex R$^{*}$-tree~\cite{libspatialindex} provides a tuned
classical reference; and BR-LZ provides a compact learned backend with a
formal recall-completeness guarantee for MBR-overlap queries. All backends
apply an exact MBR filter before returning results, so they preserve the
reference range semantics. This section focuses on BR-LZ.

\subsection{The recall gap in one-curve learned indexes}
One-curve learned spatial indexes are attractive for OSM retrieval because
they map two-dimensional objects to a one-dimensional key order and learn a
small model over that order. They are simple to serialise, cheap to build,
and lightweight enough to broadcast across workers. However, centroid-based
one-curve indexes have a correctness gap for range retrieval: a feature can
overlap the query window even when its centroid lies outside the window.
If the index only searches centroids inside the query range, such a feature
is silently missed.

A standard fix is to expand every query by the global maximum feature
extent before mapping it to the curve. This restores recall but can produce
a loose candidate set, especially when a few long features force all
queries to pay the global worst case. BR-LZ closes the same recall gap with
bounded rank residuals and a segment-local extent. The result is a learned
range index that is recall-complete by construction while reducing
candidate amplification relative to global expansion.

\subsection{Index construction and query}
\label{ssec:brlz-definition}

Let $F=\{f_1,\ldots,f_N\}$ be the OSM feature set. Each feature $f_i$ is
represented by an axis-aligned MBR, longitude and latitude extents
$w_i^{\mathrm{lon}}$ and $w_i^{\mathrm{lat}}$, and centroid $c_i$.
BR-LZ first maps each centroid to a $B$-bit Morton key
$z_i=\operatorname{Morton}_B(c_i)$ and sorts features by this key. The
sorted order induces a permutation $\pi$, where $z_{\pi(j)}$ is monotone
in rank $j$. The sorted array is then partitioned into $S$ equi-count
segments. For each segment $k$ with rank interval $I_k=[p_k,p_{k+1})$,
BR-LZ fits an endpoint-interpolating linear model
$\hat{\pi}_k(z)=a_kz+b_k$ and stores its maximum rank residual
\begin{equation}
\rho_k=\max_{j\in I_k}
\left|\hat{\pi}_k(z_{\pi(j)})-j\right|.
\end{equation}
BR-LZ also stores a maximum half-extent for each segment:
\begin{equation}
h^{\mathrm{lon}}_k=\max_{j\in I_k} w_{\pi(j)}^{\mathrm{lon}}/2,\qquad
h^{\mathrm{lat}}_k=\max_{j\in I_k} w_{\pi(j)}^{\mathrm{lat}}/2.
\end{equation}
These local extents are no larger than the global maximum extents and are
used to reduce over-expansion during query processing.

Given a query MBR $q$, BR-LZ enumerates the Morton-key segments that may
contain overlapping features. For each segment $k$, it expands $q$ by the
segment-local extent $(h^{\mathrm{lon}}_k,h^{\mathrm{lat}}_k)$, obtains the
corresponding key interval, and uses the residual bound $\rho_k$ to convert
that key interval into a rank window. Every candidate in the rank window is
then checked by the exact MBR predicate. The learned model is therefore
used only to generate a safe candidate window; correctness is enforced by
the residual bound and the final exact filter. Figure~\ref{alg:brlz} gives the complete BR-LZ build and query procedure.

\begin{figure}[t]
\centering
\fbox{\parbox{0.93\columnwidth}{\small
\textbf{Algorithm 1.}~BR-LZ build and range query\\[2pt]
\textbf{Build}$(F,B,S)$:\\
\hspace*{1em}1: \textbf{for} each $f_i\in F$: $z_i\gets\operatorname{Morton}_B(c_i)$\\
\hspace*{1em}2: $\pi\gets\operatorname{argsort}(z)$; relabel $F$ in $z$-order\\
\hspace*{1em}3: split the sorted array into $S$ equi-count segments\\
\hspace*{1em}4: \textbf{for} each segment $k$:\\
\hspace*{2em}fit $\hat{\pi}_k(z)=a_kz+b_k$\\
\hspace*{2em}$\rho_k\gets\max_{j\in I_k}|\hat{\pi}_k(z_{\pi(j)})-j|$\\
\hspace*{2em}store local extents
$h^{\mathrm{lon}}_k,h^{\mathrm{lat}}_k$\\[3pt]
\textbf{Query}$(q)$:\\
\hspace*{1em}1: $\mathcal{A}\gets\emptyset$\\
\hspace*{1em}2: \textbf{for} each segment $k$ whose key range overlaps $q$:\\
\hspace*{2em}$q_k\gets$ expand $q$ by
$(h^{\mathrm{lon}}_k,h^{\mathrm{lat}}_k)$\\
\hspace*{2em}$[z_{\min},z_{\max}]\gets$ Morton-key interval of $q_k$\\
\hspace*{2em}$\ell\gets\hat{\pi}_k(z_{\min})-\rho_k$;\quad
$u\gets\hat{\pi}_k(z_{\max})+\rho_k$\\
\hspace*{2em}\textbf{for} $j\in[\ell,u)$:
\textbf{if} $f_j.\mathrm{MBR}\cap q\neq\emptyset$
\textbf{then} $\mathcal{A}\mathrel{+}=f_j$\\
\hspace*{1em}3: \textbf{return} $\mathcal{A}$
}}
\caption{BR-LZ build and query. The bounded residual $\rho_k$ converts a
predicted key interval into a safe rank window, while the final MBR filter
preserves exact range semantics.}
\label{alg:brlz}
\end{figure}

\begin{figure}[t]
\centering
\includegraphics[width=0.80\columnwidth]{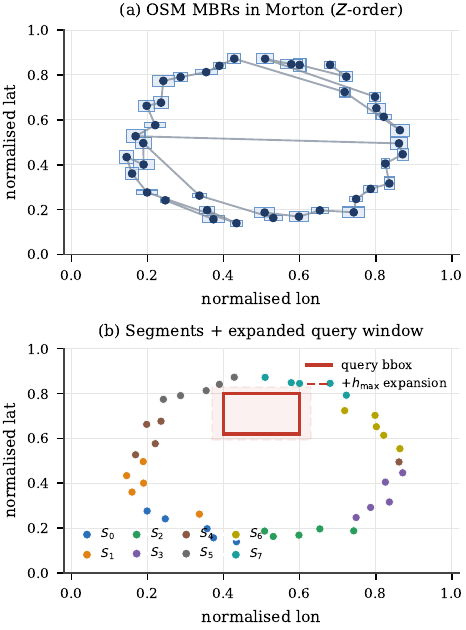}
\caption{BR-LZ linearises feature centroids by Morton order and partitions
the sorted array into equi-count segments. Segment-local extents expand the
query only where needed, and bounded residuals provide a safe rank window.}
\label{fig:brlz-schematic}
\end{figure}

\subsection{Recall completeness}
\label{ssec:brlz-recall}

\begin{theorem}[BR-LZ recall completeness]
\label{thm:brlz-recall}
For any query MBR $q$ and any feature $f_i\in F$, if
$f_i.\mathrm{MBR}\cap q\neq\emptyset$, then BR-LZ returns $f_i$.
\end{theorem}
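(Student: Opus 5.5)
Fix a query $q$ and a feature $f_i$ with $f_i.\mathrm{MBR}\cap q\neq\emptyset$. Let $j^\ast$ be the rank of $f_i$ in Morton order, so $z_{\pi(j^\ast)}=z_i$, and let $k$ be the unique segment with $j^\ast\in I_k$. I would trace $f_i$ through the three filters of the query procedure in Algorithm~1: (i) segment $k$ is enumerated, (ii) $j^\ast$ lies in the rank window $[\ell,u)$ computed for segment $k$, and (iii) the final exact predicate accepts $f_i$. Step (iii) is immediate, since the predicate tested is precisely $f_i.\mathrm{MBR}\cap q\neq\emptyset$, the hypothesis. So the work is in (i) and (ii).

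\textbf{Geometric step.} Since $f_i$'s MBR has half-extents $w_i^{\mathrm{lon}}/2\le h_k^{\mathrm{lon}}$ and $w_i^{\mathrm{lat}}/2\le h_k^{\mathrm{lat}}$ (by the definition of the segment-local extents, because $j^\ast\in I_k$), and the centroid $c_i$ is the MBR centre, overlap of the MBR with $q$ implies $c_i\in q_k$, where $q_k$ is $q$ expanded by $(h^{\mathrm{lon}}_k,h^{\mathrm{lat}}_k)$. This is a per-axis interval argument: if $[c-w/2,c+w/2]$ meets $[x_0,x_1]$, then $c\in[x_0-w/2,x_1+w/2]\subseteq[x_0-h,x_1+h]$. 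Next I use the standard monotonicity of the Morton map on axis-aligned boxes: the minimum and maximum keys of $q_k$ are attained at its lower-left and upper-right corners, so $z_{\min}\le z_i\le z_{\max}$. Quantisation to $B$ bits is monotone per axis, so this survives discretisation. The same fact also settles (i), provided ``segment whose key range overlaps $q$'' is read as overlap of the segment's key range with the Morton interval of its own expanded box $q_k$. I would state this reading explicitly, since with the unexpanded $q$ the claim can fail. Because $z_i$ lies in both the segment's key range and $[z_{\min},z_{\max}]$, the segment is enumerated.

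\textbf{Rank step (the main obstacle).} I need $\hat\pi_k(z_{\min})-\rho_k\le j^\ast\le \hat\pi_k(z_{\max})+\rho_k$. The residual bound gives $|\hat\pi_k(z_i)-j^\ast|\le\rho_k$, and monotonicity of $\hat\pi_k$ gives $\hat\pi_k(z_{\min})\le\hat\pi_k(z_i)\le\hat\pi_k(z_{\max})$. Chaining these yields the claim. Monotonicity requires $a_k\ge 0$, which I would derive from the endpoint-interpolating fit: it passes through $(z_{\pi(p_k)},p_k)$ and $(z_{\pi(p_{k+1}-1)},p_{k+1}-1)$ with nondecreasing keys and increasing ranks. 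Degenerate segments with all keys equal need a convention, such as a constant model, where the inequality still holds trivially. Two technicalities remain. First, the window $[\ell,u)$ is half-open and may be real-valued, so I would assume the implementation uses $\lfloor\ell\rfloor$ and $\lceil u\rceil+1$ (or equivalent) and clips to $I_k$. Clipping is harmless because $j^\ast\in I_k$. Second, ties in Morton keys across a segment boundary are absorbed, because we only reason inside the segment containing $j^\ast$. With these conventions fixed, $f_i$ is scanned, passes the exact filter, and is added to $\mathcal{A}$, which proves Theorem~\ref{thm:brlz-recall}.
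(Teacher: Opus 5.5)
Your proof is correct and follows the same route as the paper's: the segment-local half-extents put $c_i$ in $q_k$, so $z_i\in[z_{\min},z_{\max}]$; the residual bound then places $\pi^{-1}(i)$ in the scanned rank window, and the exact MBR filter accepts $f_i$. You also make explicit several points the paper's proof silently assumes, and each is needed for a rigorous argument: $c_i$ must be the MBR centre, the Morton map and $\hat\pi_k$ must be monotone, segment enumeration must be tested against $q_k$ rather than $q$, and the half-open real-valued window must be widened.
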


\begin{proof}
Let $f_i$ belong to segment $k$. If $f_i.\mathrm{MBR}$ overlaps $q$, then
the centroid $c_i$ lies inside $q$ expanded by the half-extent of $f_i$.
Since $h^{\mathrm{lon}}_k$ and $h^{\mathrm{lat}}_k$ upper-bound the
half-extents of all features in segment $k$, $c_i$ lies inside the
segment-expanded query $q_k$. Therefore its Morton key $z_i$ is covered by
the key interval enumerated for segment $k$. By construction of the
bounded residual,
$|\hat{\pi}_k(z_i)-\pi^{-1}(i)|\le\rho_k$, so the true rank
$\pi^{-1}(i)$ lies in the scanned rank window. The final exact MBR filter
then accepts $f_i$ because $f_i.\mathrm{MBR}\cap q\neq\emptyset$.
\end{proof}

The theorem turns recall from an empirical tuning target into a structural
property. It holds for any segment count $S$ and Morton resolution $B$.
This distinction matters in a learning pipeline: a missed map feature does
not merely reduce query recall, but changes the context tensor on which
the downstream model is trained.

\subsection{Segment-local extent and candidate amplification}
\label{ssec:extent}

Recall completeness can also be achieved by expanding every query with a
single global maximum feature extent. BR-LZ instead stores the maximum
extent per segment. Since the segment-local extent is bounded above by the
global extent, the segment-expanded query is never larger than the
globally expanded query for that segment, and it can be substantially
smaller in spatially localised regions, as shown in Figure~\ref{fig:brlz-schematic}.

We quantify this effect using candidate amplification,
i.e., the number of candidates tested by the exact MBR filter per true
answer. Table~\ref{tab:extent} compares three expansion strategies on the
DMA corpus: a single global extent, the BR-LZ segment-local extent, and a
$95$th-percentile segment extent with an overflow list. All strategies
preserve recall $1.000$; the difference is the size of the candidate set.

\begin{table}[t]
\centering
\caption{Candidate amplification on DMA
($N{=}40{,}604$, $n{=}10$ trials). All variants preserve recall
$1.000$; segment-local expansion yields the smallest candidate set.}
\label{tab:extent}
\small
\setlength{\tabcolsep}{4pt}
\begin{tabular}{rrrrr}
\toprule
$r$ (m) & global & \textbf{segment} & quantile & segment/global \\
\midrule
$1{,}000$  & $240.1$  & $\mathbf{87.9}$ & $134.0$ & $\mathbf{0.37\times}$ \\
$3{,}000$  & $1{,}119$ & $\mathbf{616}$  & $1{,}615$ & $\mathbf{0.55\times}$ \\
$5{,}000$  & $804$    & $\mathbf{416}$  & $1{,}086$ & $\mathbf{0.52\times}$ \\
$10{,}000$ & $273$    & $\mathbf{250}$  & $884$    & $\mathbf{0.92\times}$ \\
\bottomrule
\end{tabular}
\setlength{\tabcolsep}{6pt}
\end{table}

Segment-local expansion reduces amplification by
$1.1\times$--$2.7\times$ relative to global expansion at the same recall.
The quantile variant is less stable because the overflow list dominates at
larger radii. We therefore use segment-local expansion as the default
BR-LZ configuration.

\subsection{Complexity and empirical position}
\label{ssec:pareto}

BR-LZ builds with one sort and one pass over the sorted array, giving
$O(N\log N)$ build time and $\Theta(N+S)=\Theta(N)$ space. A query first
locates the relevant segment/key ranges and then scans residual-expanded
rank windows. With equi-count segments, setting
$S=\Theta(\sqrt{N})$ balances the number of segments and the per-segment
residual window, yielding an $O(\log N+\sqrt{N})$ query bound. Table
~\ref{tab:pareto-asym} summarises the asymptotic comparison with
representative one-curve learned indexes.

\begin{table}[t]
\centering
\caption{Asymptotic comparison of one-curve learned range indexes.}
\label{tab:pareto-asym}
\small
\setlength{\tabcolsep}{3pt}
\begin{tabular}{lccc}
\toprule
Index & Build & Query & Footprint \\
\midrule
ZM-Index
  & $O(N\log N)$
  & $O(\log S+\rho_{\max})$
  & $O(N+S)$ \\
LISA
  & $O(N\log N)$
  & $O(|G_q|\,\bar{\rho}_{\mathrm{cell}})$
  & $O(N+G^2)$ \\
\textbf{BR-LZ}
  & $O(N\log N)$
  & $O(\log N+\sqrt{N})$
  & $O(N)$ \\
\bottomrule
\end{tabular}
\setlength{\tabcolsep}{6pt}
\end{table}

Table~\ref{tab:pareto} compares BR-LZ with classical and learned baselines
on DMA under the same query harness. BR-LZ has the fastest build time and
the smallest learned-index footprint, while its query latency is within
the range of the exact classical baselines. Unlike the other learned
indexes in the table, its recall guarantee follows from
Theorem~\ref{thm:brlz-recall} rather than from empirical tuning.

\begin{table}[t]
\centering
\caption{OSM index comparison on DMA
($N{=}40{,}604$, $r{=}5$\,km, $n{=}10$ trials). All systems are timed
through one unified harness.}
\label{tab:pareto}
\small
\setlength{\tabcolsep}{3pt}
\begin{tabular}{lrrrc}
\toprule
Index & build (ms) & size (KB) & $p_{50}$ ($\mu$s) & guarantee \\
\midrule
STR-tree (ref)     & $30.4$ &  $1183$  & $65.4$ & exact \\
LibSpatial R$^*$   & $81.7$ & $3172$  & $70.9$ & exact \\
LISA               & $16.1$ & $1160$  & $66.7$ & no \\
ZM-Index           & $16.1$ & $1220$  & $85.3$ & no \\
RSMI               & $32.7$ & $1240$  & $68.8$ & no \\
Flood~\cite{nathan2020flood}
                   & $20.8$ & $1155$  & $69.2$ & no \\
LMSFC~\cite{gao2022lmsfc}
                   & $16.9$ & $1272$  & $89.6$ & no \\
\textbf{BR-LZ}
                   & $\mathbf{15.9}$ & $\mathbf{1145}$ & $63.5$
                   & \textbf{provable} \\
\bottomrule
\end{tabular}
\setlength{\tabcolsep}{6pt}
\end{table}

These properties make BR-LZ the preferred learned backend for M-CTX's
static range stage. Its compact state consists only of the Morton-sorted
feature array, segment coefficients, residual bounds, and segment-local
extents. The index serialises to a small memory-mappable blob and can be
broadcast to workers without native runtime dependencies, which is
important for the multi-worker context-construction setting evaluated in
Section~\ref{ssec:concurrent}.

\section{Linear-Time SDF and Storage Co-Design}
\label{sec:sdf}

The SDF stage is the largest component of the reference context pipeline
(Table~\ref{tab:profile}). For each anchor, it materialises two
$128\times128$ signed distance fields: distance to shore and distance to
navigable water. M-CTX improves this stage in two steps. First, it replaces
the quadratic distance kernel with an exact linear-time transform. Second,
it reduces the storage footprint of the generated SDF tensors without
changing the downstream model. Figure~\ref{fig:sdf-context} shows the emitted SDF context tile for an
anchor-centred patch.

\begin{figure}[t]
\centering
\includegraphics[width=0.96\columnwidth]{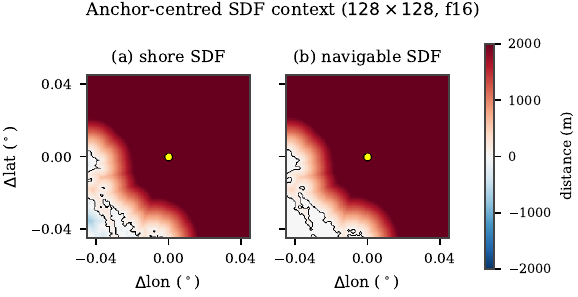}
\caption{Anchor-centred SDF context tile. M-CTX emits signed distance to
shore and navigable water over the local $5$\,km patch.}
\label{fig:sdf-context}
\end{figure}

\subsection{Exact linear-time SDF construction}

The reference \texttt{\_udist} kernel computes an unsigned distance field
by materialising pairwise distances from every grid cell to every mask
pixel and taking a minimum. For an $H\times W$ grid with $M$ occupied mask
pixels, this costs $\Theta(HW\cdot M)$ per SDF. M-CTX replaces this kernel
with the exact Felzenszwalb--Huttenlocher Euclidean distance transform
~\cite{felzenszwalb2004distance}, which computes the same distance field
in $\Theta(HW)$ time using two separable lower-envelope passes. We use the
compiled CPU implementation~\cite{virtanen2020scipy} and provide an
optional batched GPU variant for large corpus rebuilds.

\begin{table}[t]
\centering
\caption{Per-scene SDF latency for $80$ patches per scene at grid size
$g=128$. The linear-time engine is occupancy-invariant, while the
quadratic baseline slows down as mask density increases.}
\label{tab:sdf-scene}
\small
\setlength{\tabcolsep}{3.5pt}
\begin{tabular}{lrrrr}
\toprule
Scene & Occ.\ (px) & Naive (ms) & M-CTX (ms) & Speed-up \\
\midrule
Harbor       & $2066.5$ & $2340.4$ & $1.20$ & $1\,955\times$ \\
Nearshore    & $780.2$  & $1443.5$ & $1.03$ & $1\,402\times$ \\
Constrained  & $474.0$  & $2135.8$ & $1.01$ & $\mathbf{2\,109\times}$ \\
Open water   & $5.5$    & $0.16$   & $0.15$ & $1.1\times$ \\
\bottomrule
\end{tabular}
\setlength{\tabcolsep}{6pt}
\end{table}

Table~\ref{tab:sdf-scene} shows that the speed-up follows the expected
asymptotic behaviour. Dense scenes benefit most because the quadratic
baseline scales with the number of mask pixels, whereas M-CTX is nearly
constant across occupancy. At the full-stage level, SDF latency drops from
$176.4$\,ms to $1.08$\,ms per anchor, a $163\times$ reduction. At higher
resolutions, the gap widens further: at $g=512$, the quadratic kernel
exceeds $5$\,s per sample, while M-CTX remains below $23$\,ms.

For very large batches, the optional GPU path stacks patches along the
batch dimension and amortises kernel-launch overhead. On our hardware, it
reduces batch-$64$ latency from $0.84$\,ms with the CPU path to
$0.37$\,ms. Since the CPU implementation already removes SDF construction
from the critical path, the GPU path is used only for large-scale
re-materialisation.

\subsection{Storage--accuracy co-design}
\label{ssec:storage}

Exact SDF construction removes the computational bottleneck, but storing
two dense SDF rasters per anchor can still dominate the corpus footprint.
We therefore sweep dtype, resolution, and clipping thresholds, and evaluate
each representation using the pretrained \texttt{lstm\_env\_sdf} model on
$n=2{,}000$ test anchors. Table~\ref{tab:storage} reports the no-clipping
frontier, measured against the $128^2$ f16 baseline. Figure~\ref{fig:storage} plots the corresponding storage--accuracy Pareto
frontier.

\begin{table}[t]
\centering
\caption{SDF storage--accuracy frontier without clipping. Compression is
measured against the $128^2$ f16 baseline.}
\label{tab:storage}
\small
\setlength{\tabcolsep}{3pt}
\begin{tabular}{lrrr}
\toprule
Representation & B/anchor & Compression & $\Delta$ADE (m) \\
\midrule
f16 $\times$ $128^2$ baseline & $65\,536$ & $1\times$  & $0.000$ \\
f16 $\times$ $64^2$           & $16\,384$ & $4\times$  & $-0.031$ \\
uint8 $\times$ $64^2$         & $8\,192$  & $8\times$  & $-0.023$ \\
f16 $\times$ $32^2$           & $4\,096$  & $16\times$ & $+0.031$ \\
\textbf{uint8 $\times$ $32^2$}
                              & $\mathbf{2\,048}$
                              & $\mathbf{64\times}$
                              & $\mathbf{+0.038}$ \\
\bottomrule
\end{tabular}
\setlength{\tabcolsep}{6pt}
\end{table}

\begin{figure}[t]
\centering
\includegraphics[width=0.9\columnwidth]{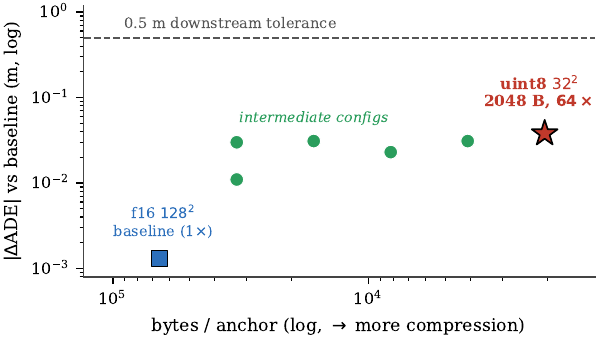}
\caption{SDF storage--accuracy Pareto front. Quantisation and downsampling
provide large footprint reductions with negligible ADE change.}
\label{fig:storage}
\end{figure}

The best operating point is \textbf{uint8 quantisation at
$32\times32$ resolution}: it stores each anchor in $2{,}048$ bytes,
achieving $64\times$ compression with only a $+0.038$\,m ADE change. At
corpus scale, this reduces the SDF working set from roughly $720$\,GB to
about $11$\,GB, turning a multi-machine materialisation problem into a
single-machine one.

The sweep also reveals a clear failure mode. Downsampling and uniform
quantisation have negligible impact, whereas narrow-band clipping is
harmful: clipping the SDF magnitude to $\pm500$\,m or $\pm1$\,km increases
ADE by tens of metres. This suggests that the downstream model tolerates
low-resolution and low-precision SDF storage, but still relies on the
long-range distance magnitude. The design rule is therefore simple:
\emph{compress by dtype and resolution, but do not saturate the distance
range}.

\subsection{Precision sensitivity}
\label{ssec:precision}

The storage frontier in Table~\ref{tab:storage} hides an important
distinction: reducing dtype or resolution is safe, but saturating the SDF
magnitude is not. Table~\ref{tab:sdf-prec} reports a seven-level precision
sweep using the pretrained \texttt{lstm\_env\_sdf} checkpoint on
$n=2{,}000$ test anchors. Downsampling to $32^2$ and 8-bit quantisation
leave ADE essentially unchanged, even when tensor-level SDF error grows.
In contrast, narrow-band clipping increases ADE by tens of metres. This
shows that the model is tolerant to coarse and low-precision SDF storage,
but still uses the long-range distance magnitude. The resulting rule is
therefore: compress by dtype and resolution, but do not clip the distance
range.

\begin{table}[t]
\centering
\caption{SDF precision sensitivity. $\Delta$ADE is measured against the
f32 reference representation on $n=2{,}000$ test anchors.}
\label{tab:sdf-prec}
\small
\setlength{\tabcolsep}{4pt}
\begin{tabular}{llrr}
\toprule
Level & Representation & SDF MAE (m) & $\Delta$ADE (m) \\
\midrule
L0 & f32 reference           & $0.0$    & $0.000$ \\
L1 & f16 storage             & $0.0$    & $0.000$ \\
L4 & $64^2$ resolution       & $0.6$    & $+0.008$ \\
L5 & $32^2$ resolution       & $1.4$    & $+0.020$ \\
L6 & 8-bit quantisation      & $52.5$   & $-0.003$ \\
L3 & clip $\pm500$\,m        & $4{,}175$ & $+66.5$ \\
L2 & clip $\pm1$\,km         & $3{,}697$ & $+103.7$ \\
\bottomrule
\end{tabular}
\setlength{\tabcolsep}{6pt}
\end{table}

\section{The Streaming Neighbour Index}
\label{sec:neighbor}

The neighbour stage answers $K_S(a,3\text{\,km},k)$ for each anchor. The
reference pipeline scans the full vessel snapshot, costing
$88.2$\,ms per anchor. M-CTX replaces this scan with an incremental
B$^{x}$-tree~\cite{jensen2004bx}, reducing the neighbour query to
$14.2\,\mu$s and yielding a $6{,}212\times$ stage-level speed-up.

\subsection{Incremental moving-object lookup}

Each vessel record is encoded as a phase-aware spatial key
\[
k(t,x,y)=\mathrm{phase}(t)\,\Vert\,\operatorname{Morton}_B(x,y),
\qquad
\mathrm{phase}(t)=\lfloor t/T\rfloor ,
\]
where $T$ is the phase length and $\operatorname{Morton}_B$ maps the local
projected coordinates to a $B$-bit space-filling-curve key. A query first
maps the anchor-centred search window to key intervals in the relevant
phase, scans the corresponding slice, and then applies an exact geographic
radius filter before returning the top-$k$ neighbours. The key-range scan
is therefore used only to generate candidates; recall is enforced by the
final exact filter.

This design matches AIS streams for two reasons. First, inserts are
$O(\log N)$ and do not require rebuilding a snapshot index. A new time
phase simply occupies a later key range, while expired phases are evicted
from the rolling window. Second, queries and inserts can interleave at
sub-snapshot granularity, which is the operating mode of live AIS feeds.
In contrast, a snapshot KD-tree can be efficient after rebuilding, but it
cannot answer queries between two rebuilds without exposing stale state.

\subsection{Streaming performance}

Table~\ref{tab:stream} stresses one B$^{x}$-tree with $10^7$ synthetic
records under four arrival patterns and with real AIS streams from four
regions. Across all settings, recall remains $1.000$. Synthetic ingest
sustains $117$--$131$K records/s with $q_{p50}\le0.13$\,ms, and real AIS
streams sustain $214$--$273$K records/s with $q_{p50}\le0.03$\,ms. The
index is insensitive to arrival order because out-of-order records are
inserted into their corresponding phase rather than triggering a global
rebuild. Figure~\ref{fig:streaming} visualises the sustained ingest rate and query
latency across the four arrival patterns.

\begin{table}[t]
\centering
\caption{B$^{x}$-tree streaming ingest. Synthetic streams contain
$10^7$ records under four arrival patterns; real streams are evaluated per
region and after merging all four regions. Recall is $1.000$ throughout.}
\label{tab:stream}
\small
\setlength{\tabcolsep}{4pt}
\begin{tabular}{lrrr}
\toprule
Stream & Rate (rec/s) & Insert ($\mu$s) & $q_{p50}$ (ms) \\
\midrule
Synthetic, batch        & $131\,283$ & $9.91$  & $0.09$ \\
Synthetic, per-record   & $127\,583$ & $10.62$ & $0.11$ \\
Synthetic, bursty       & $117\,862$ & $11.57$ & $0.13$ \\
Synthetic, out-of-order & $117\,395$ & $12.10$ & $0.13$ \\
\midrule
Real DMA                & $214\,274$ & --      & $0.02$ \\
Real NOAA               & $272\,502$ & --      & $0.02$ \\
Real Norway             & $238\,424$ & --      & $0.02$ \\
Real Piraeus            & $243\,295$ & --      & $0.02$ \\
\textbf{Real merged-4}  & $\mathbf{253\,574}$ & -- & $0.03$ \\
\bottomrule
\end{tabular}
\setlength{\tabcolsep}{6pt}
\end{table}

\begin{figure}[t]
\centering
\includegraphics[width=0.9\columnwidth]{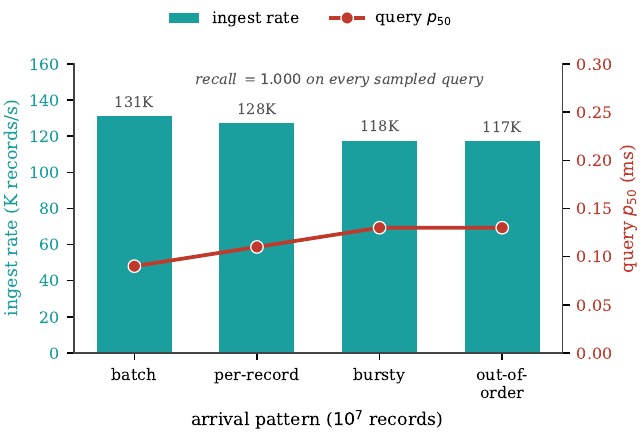}
\caption{Sustained ingest rate and query latency over $10^7$ records.
The B$^{x}$-tree remains stable across batch, per-record, bursty, and
out-of-order arrivals.}
\label{fig:streaming}
\end{figure}

\subsection{Comparison with snapshot rebuilding}

We compare B$^{x}$-tree against a fair KD-tree baseline that rebuilds once
per timestamped snapshot and projects each query into its own local ENU
frame. This removes projection bias and isolates the real design trade-off:
snapshot rebuilding versus incremental updates. As shown in
Table~\ref{tab:fair-stream}, the KD-tree has lower amortised insert cost
because many records share one rebuild, but it does not support
sub-snapshot queries. B$^{x}$-tree instead supports interleaved inserts and
queries while preserving exact recall.

\begin{table}[t]
\centering
\caption{Fair streaming-replay comparison
($50\,000$ records, $686$ snapshots, $n=3$ trials). Both methods use
per-query Haversine filtering; the distinction is update granularity.}
\label{tab:fair-stream}
\small
\setlength{\tabcolsep}{4pt}
\begin{tabular}{lrr}
\toprule
Metric & FairKDTree (rebuild) & B$^{x}$-tree (incremental) \\
\midrule
Recall vs.\ oracle  & $0.980\pm0.000$ & $\mathbf{1.000\pm0.000}$ \\
Insert ($\mu$s/rec) & $\mathbf{0.48\pm0.02}$ & $4.68\pm0.16$ \\
Sub-snapshot query  & no & \textbf{yes} \\
\bottomrule
\end{tabular}
\setlength{\tabcolsep}{6pt}
\end{table}

Figure~\ref{fig:nbr-scale} further shows that query latency remains nearly
flat across anchor count for both KD-tree and B$^{x}$-tree. Thus, the
advantage of B$^{x}$-tree is not a per-query micro-optimisation; it is the
ability to maintain an exact neighbour index under streaming updates. The
phase length is also stable in practice: varying $T$ from $10$ to $60$\,s
changes build time by less than $10\%$ and query latency by less than
$4\%$. We therefore use $T=20$\,s, matching the AIS cadence, and a
$16$-bit Morton grid by default.

\begin{figure}[t]
\centering
\includegraphics[width=\columnwidth]{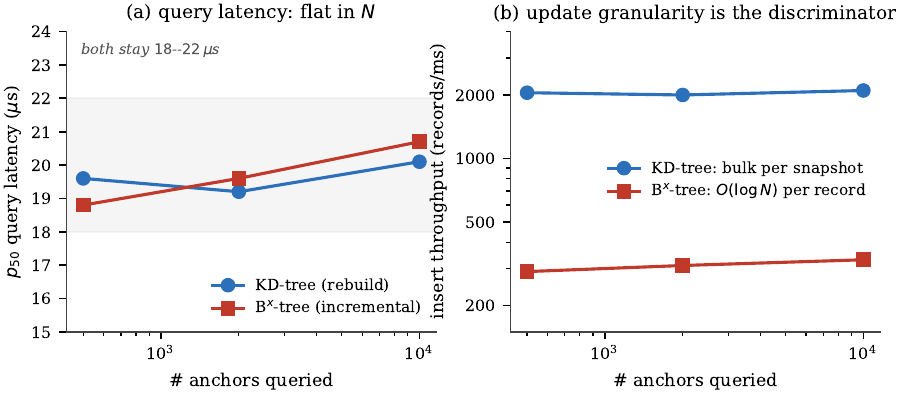}
\caption{Neighbour index scaling. Query latency remains flat across anchor
count, while B$^{x}$-tree supports per-record incremental updates required
by live AIS streams.}
\label{fig:nbr-scale}
\end{figure}

\section{Experimental Evaluation}
\label{sec:bench}

\subsection{Setup}

We evaluate M-CTX on the EnvShip-Bench Standard Track
($120$K/$15$K/$15$K train/val/test anchors)
and on OSM caches from four real maritime regions: Denmark (DMA), NOAA
East Coast, Norway, and Piraeus. The four-region OSM caches and AIS corpora are built with the publicly released EnvShip-Bench pipeline,\footnote{\url{https://github.com/mark000071/EnvShip-Bench_Large_Dataset_Pipeline_and_datasets}} and all experiments in this paper can be reproduced from the M-CTX repository,\footnote{\url{https://github.com/mark000071/M-CTX-Traj}} where each table maps to a one-line command. Table~\ref{tab:datasets} summarises the
static map workloads. Unless otherwise stated, all systems are timed on a
single Intel Xeon + NVIDIA L40S (48\,GB) node through the same per-query harness.
We report mean values over $n=10$ trials and score recall against a warm
linear-scan oracle on queries with at least one true hit.

\begin{table}[t]
\centering
\caption{Real maritime regions used for cross-region evaluation.}
\label{tab:datasets}
\small
\setlength{\tabcolsep}{4pt}
\begin{tabular}{llrr}
\toprule
Region & Coastline type & \# features & Recall \\
\midrule
DMA (Denmark)    & North Sea / Baltic & $40\,604$ & $1.000$ \\
NOAA (E.\ USA)   & Atlantic seaboard  & $89\,757$ & $1.000$ \\
Norway           & fjord / narrow     & $14\,557$ & $1.000$ \\
Piraeus (Greece) & Aegean port        & $992$     & $1.000$ \\
\bottomrule
\end{tabular}
\setlength{\tabcolsep}{6pt}
\end{table}

We compare against eight baselines: a warm in-memory linear scan,
libspatialindex R$^{*}$-tree, H3, DuckDB-spatial, LISA, RSMI,
Flood~\cite{nathan2020flood}, and LMSFC~\cite{gao2022lmsfc}. M-CTX's
STR-tree and BR-LZ are evaluated under the same harness. The experiments
answer four questions: (i) how much each context stage accelerates;
(ii) how BR-LZ compares with classical and learned range indexes;
(iii) whether the system generalises across regions and scales to large
feature sets; and (iv) how the component gains compose end-to-end.

\paragraph*{Measurement protocol and reproducibility}
To reduce measurement artefacts, all indexed range-query systems are
evaluated through the same Python-level harness and the same final exact
MBR filter. We separate cold-start costs from steady-state query costs:
index construction and data loading are reported explicitly, while
per-query latency is measured after the index has been materialised in
memory. Each table is generated from a JSON result file, and an audit
script recomputes all reported speed-ups from the corresponding source
latencies before producing the paper tables. The released artifact contains
the scripts, configuration files, and one-command runners for every
reported experiment, including the OSM index benchmarks, SDF precision
sweep, streaming replay, and end-to-end context build.

\subsection{Headline acceleration}
\label{ssec:headline}

Figure~\ref{fig:headline} reports per-stage speed-ups against a fair
baseline for each stage: warm linear scan for OSM range retrieval, the
reference quadratic distance transform for SDF, and brute-force snapshot
scan for neighbours. M-CTX reduces OSM range latency from
$1\,446\,\mu$s to $63.5\,\mu$s ($23\times$), SDF latency from
$176.4$\,ms to $1.08$\,ms ($163\times$), and neighbour lookup from
$88.2$\,ms to $14.2\,\mu$s ($6{,}212\times$).

\begin{figure}[t]
\centering
\includegraphics[width=\columnwidth]{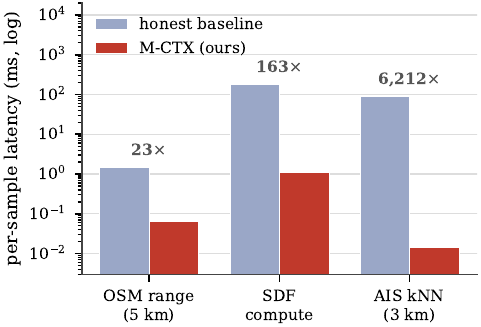}
\caption{Per-stage latency reduction. M-CTX accelerates OSM range
retrieval, SDF construction, and neighbour lookup by $23\times$,
$163\times$, and $6{,}212\times$, respectively.}
\label{fig:headline}
\end{figure}

These stage-level gains compose into a large end-to-end improvement.
Table~\ref{tab:e2e} shows that M-CTX reduces the $150$K-anchor Standard
Track context build from about $11$ hours to $169$ seconds. On the full
$5.48$M-anchor corpus, the measured full-pipeline runtime drops from
about $17$ CPU-days to $1.8$ hours, yielding the headline $226\times$
speed-up.

\begin{table}[t]
\centering
\caption{End-to-end context-build time. The $5.48$M-anchor row is a
measured full-pipeline run; the $150$K row composes the measured
per-stage costs.}
\label{tab:e2e}
\small
\setlength{\tabcolsep}{4pt}
\begin{tabular}{lrrr}
\toprule
Workload & Reference & M-CTX & Speed-up \\
\midrule
150K anchors
  & $39\,690$\,s ($\approx$11\,h)
  & $169$\,s
  & $235\times$ \\
5.48M anchors
  & $1.45$M\,s ($\approx$17\,d)
  & $6\,420$\,s ($\approx$1.8\,h)
  & $\mathbf{226\times}$ \\
\bottomrule
\end{tabular}
\setlength{\tabcolsep}{6pt}
\end{table}

\subsection{OSM range-retrieval comparison}
\label{ssec:cross-system}

Table~\ref{tab:osm-cross} compares BR-LZ with classical, external, and
learned range indexes on DMA at $r=5$\,km. All exact systems are evaluated
through the same wrapper and final MBR filter. Every indexed backend is
substantially faster than the warm linear scan. BR-LZ achieves the lowest
$p_{50}$ latency, the fastest build time among the compared indexes, and
recall $1.000$ with a formal recall-completeness guarantee.

\begin{table}[t]
\centering
\caption{Cross-system OSM range-query benchmark on DMA
($N=40\,604$, $r=5$\,km, $n=10$ trials). Recall is measured against the
linear-scan oracle.}
\label{tab:osm-cross}
\small
\setlength{\tabcolsep}{3pt}
\begin{tabular}{lrrrr}
\toprule
System & Build (ms) & $p_{50}$ ($\mu$s) & QPS & Recall \\
\midrule
Linear scan (warm)   & --     & $1\,446$ & $0.7$\,k & $1.000$ \\
\midrule
M-CTX STR-tree       & $30.4$ & $65.4$   & $15$\,k  & $1.000$ \\
libspatialindex      & $81.7$ & $70.9$   & $14$\,k  & $1.000$ \\
H3 (res 7, $k=4$)    & $50$   & $73.4$   & $14$\,k  & $0.997$ \\
DuckDB spatial       & $30$   & $508$    & $1.9$\,k & $1.000$ \\
\midrule
LISA                 & $16.1$ & $66.7$   & $15$\,k  & $1.000$ \\
RSMI                 & $32.7$ & $68.8$   & $14$\,k  & $1.000$ \\
Flood~\cite{nathan2020flood}
                     & $20.8$ & $69.2$   & $14$\,k  & $1.000$ \\
LMSFC~\cite{gao2022lmsfc}
                     & $16.9$ & $89.6$   & $9.9$\,k & $1.000$ \\
\textbf{BR-LZ}
                     & $\mathbf{15.9}$ & $\mathbf{63.5}$
                     & $\mathbf{16}$\,k & $\mathbf{1.000}$ \\
\bottomrule
\end{tabular}
\setlength{\tabcolsep}{6pt}
\end{table}

The result is important for the M-CTX setting because build time and
footprint are part of the amortised cost. BR-LZ does not merely match
classical indexes in query latency; it also provides the fastest cold
construction and the smallest learned-index footprint reported in
Section~\ref{ssec:pareto}.

\subsection{Cross-region robustness}
\label{ssec:cross-region}

To test distribution shift, we repeat the OSM benchmark on structurally
different coastlines. Table~\ref{tab:region} reports Norway, a narrow-fjord
region with footprint-corrected queries. All indexes retain recall
$1.000$, and BR-LZ gives the fastest build and lowest latency at both
query radii. The full four-region results show the same pattern: recall
remains exact without per-region tuning, despite a $90\times$ spread in
feature count and substantial differences in coastline geometry.

\begin{table}[t]
\centering
\caption{Cross-region OSM index ranking on Norway
($N=14\,557$, recall $1.000$).}
\label{tab:region}
\small
\setlength{\tabcolsep}{4pt}
\begin{tabular}{lrrr}
\toprule
Index & Build (ms) & $p_{50}$@2km ($\mu$s) & $p_{50}$@5km ($\mu$s) \\
\midrule
STR-tree     & $7.9$  & $136$ & $345$ \\
LISA         & $6.5$  & $159$ & $286$ \\
RSMI         & $7.6$  & $234$ & $404$ \\
LibSpatial   & $30.3$ & $142$ & $211$ \\
\textbf{BR-LZ}
              & $\mathbf{6.2}$ & $\mathbf{112}$ & $\mathbf{139}$ \\
\bottomrule
\end{tabular}
\setlength{\tabcolsep}{6pt}
\end{table}

\subsection{Scaling to large static maps}
\label{ssec:scale}

Table~\ref{tab:scale} scales the static OSM workload to $40$M synthetic
features. Build time grows approximately linearly after sorting dominates,
and all indexes remain below $10$\,ms median query latency at $40$M
features. BR-LZ preserves its build advantage across two orders of
magnitude and remains among the fastest query backends at every scale. Figure~\ref{fig:scale-synth} shows the corresponding median-latency trend
as the feature count increases.

\begin{table}[t]
\centering
\caption{Synthetic OSM scale-up to $N=40$M features. Each cell reports
build time (s) / $p_{50}$ query latency (ms), with $r=5$\,km.}
\label{tab:scale}
\small
\setlength{\tabcolsep}{3pt}
\begin{tabular}{lrrrrr}
\toprule
$N$ & STR & LISA & ZM & RSMI & BR-LZ \\
\midrule
1M
  & 0.8/0.34  & 0.6/0.14  & 0.5/0.15  & 0.6/0.17   & 0.5/0.16 \\
4M
  & 3.4/0.72  & 2.5/0.34  & 2.4/0.36  & 2.7/0.52   & 2.4/0.35 \\
16M
  & 14.4/2.55 & 13.1/1.04 & 11.1/1.36 & 14.3/2.48  & 13.1/1.09 \\
40M
  & 37.3/5.93 & 31.0/2.50 & 28.3/3.21 & 31.1/10.16 & 28.5/2.57 \\
\bottomrule
\end{tabular}
\setlength{\tabcolsep}{6pt}
\end{table}

\begin{figure}[t]
\centering
\includegraphics[width=0.95\columnwidth]{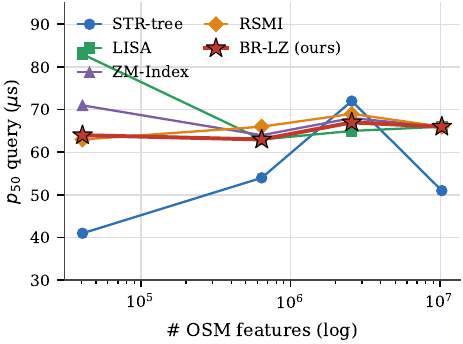}
\caption{Synthetic OSM scale-up. Median query latency remains stable as
feature count grows by two orders of magnitude.}
\label{fig:scale-synth}
\end{figure}

\subsection{Concurrent throughput and cold start}
\label{ssec:concurrent}

M-CTX parallelises context retrieval by partitioning anchor batches across
workers while sharing broadcast-loaded indexes. Table~\ref{tab:concurrent}
evaluates STR-tree throughput on the merged four-region corpus
($N=145\,910$). Throughput increases from $14{,}623$ qps with one worker
to $174{,}900$ qps with $16$ workers, an $11.96\times$ speed-up. A
space-tiled shard partition keeps the mean number of shards touched per
query below $1.08$, indicating that M-CTX can be partitioned cleanly for
multi-node execution.

\begin{table}[t]
\centering
\caption{Concurrent STR-tree throughput on the merged four-region corpus
($N=145\,910$, $30\,000$ queries, $n=3$ trials).}
\label{tab:concurrent}
\small
\setlength{\tabcolsep}{4pt}
\begin{tabular}{rrrr}
\toprule
Workers & QPS & Speed-up & Efficiency \\
\midrule
$1$  & $14\,623$  & $1.00\times$  & $1.00$ \\
$2$  & $25\,375$  & $1.73\times$  & $0.87$ \\
$4$  & $49\,853$  & $3.41\times$  & $0.85$ \\
$8$  & $116\,079$ & $7.94\times$  & $0.99$ \\
$16$ & $\mathbf{174\,900}$ & $\mathbf{11.96\times}$ & $0.75$ \\
\bottomrule
\end{tabular}
\setlength{\tabcolsep}{6pt}
\end{table}

Cold-start overhead is small after data loading. On the Standard Track,
tile loading and JSON deserialisation take $355.8$\,s, while MBR extraction
takes $1.0$\,s and STR-tree construction only $90.5$\,ms. Amortised over
$150$K anchors, the one-time startup cost is about $2.4$\,ms per anchor and
is paid once rather than per query.

\subsection{Partitioning and Deployment Robustness}
\label{ssec:partition}

The single-node implementation broadcasts one index copy to multiple
workers. To study whether the same design can be sharded for distributed
execution, we simulate a space-tiled deployment on the merged four-region
corpus. Figure~\ref{fig:shard-part} compares two layouts. Morton-rank
stripes balance feature count, but their spatial footprints overlap
heavily, causing broad query fan-out. A kd-tree median split
~\cite{bentley1975kdtree} instead produces spatially disjoint cells, so a
query is routed only to the cells it intersects. In our simulator, this
partition keeps the mean number of shards touched per query below $1.08$
up to $16$ shards. Figure~\ref{fig:shard} quantifies this query fan-out under space-tiled sharding. Thus, M-CTX can be partitioned without turning each
context request into a broadcast query; measuring inter-node latency and
replication cost is left to a full distributed deployment.

\begin{figure}[t]
\centering
\includegraphics[width=0.96\columnwidth]{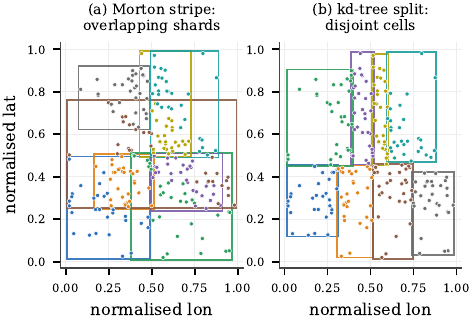}
\caption{Shard partitioning. Morton-rank stripes balance key ranges but
create overlapping spatial footprints; kd-tree median splits produce
disjoint cells and lower query fan-out.}
\label{fig:shard-part}
\end{figure}

\begin{figure}[t]
\centering
\includegraphics[width=0.88\columnwidth]{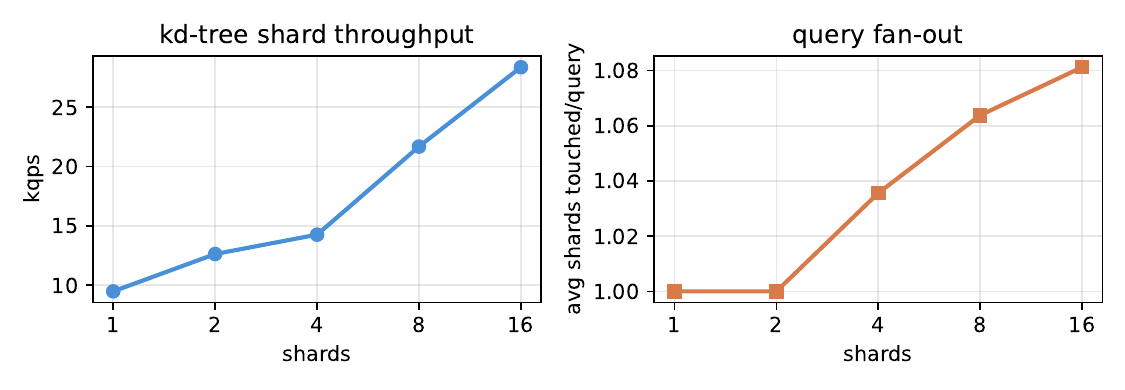}
\caption{Space-tiled shard simulator. Mean query fan-out remains below
$1.08$ up to $16$ shards, supporting clean partitioning for distributed
context retrieval.}
\label{fig:shard}
\end{figure}

\subsection{Component ablation}
\label{ssec:ablation}

Table~\ref{tab:e2e_ablation} decomposes the end-to-end gain by replacing
one or more reference stages with the corresponding M-CTX component.
SDF acceleration is the largest single contributor, reducing the
Standard-Track build from $11.6$ hours to $4.3$ hours. Neighbour indexing
alone gives a $1.5\times$ gain, while OSM-only acceleration is small
because tile loading is amortised in this setting. Combining SDF and
neighbour indexing already reaches $19.6\times$; the full system reduces
the build to $169$ seconds.

\begin{table}[t]
\centering
\caption{End-to-end component ablation on the $150$K-anchor Standard
Track. Each row replaces one or more reference stages with M-CTX
components.}
\label{tab:e2e_ablation}
\small
\setlength{\tabcolsep}{3pt}
\begin{tabular}{lcccrr}
\toprule
Variant & OSM & SDF & $k$NN & Total & Speed-up \\
\midrule
Reference      & ref   & ref   & ref   & $11.6$\,h   & $1.0\times$ \\
OSM-only       & M-CTX & ref   & ref   & $11.0$\,h   & $1.05\times$ \\
SDF-only       & ref   & M-CTX & ref   & $4.3$\,h    & $2.7\times$ \\
$k$NN-only     & ref   & ref   & M-CTX & $7.9$\,h    & $1.5\times$ \\
OSM+SDF        & M-CTX & M-CTX & ref   & $3.7$\,h    & $3.1\times$ \\
OSM+$k$NN      & M-CTX & ref   & M-CTX & $7.4$\,h    & $1.6\times$ \\
SDF+$k$NN      & ref   & M-CTX & M-CTX & $35.5$\,min & $19.6\times$ \\
\textbf{Full M-CTX}
               & \textbf{M} & \textbf{M} & \textbf{M}
               & $\mathbf{169}$\,\textbf{s}
               & $\mathbf{235\times}$ \\
\bottomrule
\end{tabular}
\setlength{\tabcolsep}{6pt}
\end{table}

All headline gaps are statistically significant under a paired Wilcoxon
signed-rank test at the trial level ($n=10$), with matched query sets
across systems. We additionally verify every displayed speed-up against
its two source measurements using an automated audit script.

\section{Downstream Equivalence}
\label{sec:downstream}

M-CTX is useful only if acceleration does not change the context consumed
by the predictor. We therefore verify equivalence at two levels: the
emitted context tensors and the downstream predictions.

\paragraph*{Context-level equivalence}
For SDF construction, the reference \texttt{\_udist} kernel and the
M-CTX linear-time EDT produce identical float32 distance fields, with mean
absolute element-wise difference $0.000$\,m over the
$128\times128\times2$ tensor. When compared against the stored float16
raster, the residual is $0.089$\,m; after both outputs are cast to the
stored float16 representation, the difference becomes $0.000$\,m. Thus,
the remaining discrepancy is due to storage quantisation, not algorithmic
divergence. For OSM range retrieval and neighbour lookup, M-CTX returns
the same feature and vessel identifier sets as the oracle at recall
$1.000$. Under the reference storage schema, M-CTX therefore emits
byte-identical context tensors.

\paragraph*{Model-level equivalence}
On the full $15{,}000$-sample test set, replacing the reference context
backend with M-CTX reproduces the published ADE/FDE
($85.8271$/$211.1504$\,m) with per-step prediction MAE exactly $0$.
Table~\ref{tab:multi-model} further evaluates five pretrained checkpoints
on $n=2{,}000$ test samples. In all cases, M-CTX yields the same ADE and
zero prediction MAE under the reported precision. Thus, the acceleration
introduces no measurable downstream accuracy cost, satisfying the
drop-in-equivalence requirement of a production context backend.

\begin{table}[t]
\centering
\caption{Downstream impact across five pretrained checkpoints
($n=2{,}000$ test samples). M-CTX preserves the predictions of all tested
models under the reported precision.}
\label{tab:multi-model}
\small
\setlength{\tabcolsep}{5pt}
\begin{tabular}{lrrr}
\toprule
Model & ADE (m) & $\Delta$ADE (m) & Pred.\ MAE (m) \\
\midrule
LSTM+SDF          & $76.7165$ & $0$ & $0$ \\
LSTM+spatial-attn & $78.2553$ & $0$ & $0$ \\
LSTM+binary-attn  & $94.5108$ & $0$ & $0$ \\
LSTM+social       & $77.0156$ & $0$ & $0$ \\
Transformer+SDF   & $76.9342$ & $0$ & $0$ \\
\bottomrule
\end{tabular}
\setlength{\tabcolsep}{6pt}
\end{table}

\section{Discussion and Limitations}
\label{sec:discuss}

\paragraph*{Deployment choices}
M-CTX exposes a small set of backends rather than a single fixed index,
because different deployments prioritise different costs. For static OSM
retrieval, BR-LZ is the recommended learned backend: it combines the
fastest build time, the smallest learned-index footprint, competitive
query latency, and a recall-completeness guarantee for MBR-overlap
queries. The STR-tree remains a strong exact alternative when a deployment
prefers a dependency-free classical index. For SDF materialisation, the
best operating point is stable: \texttt{uint8} at $32\times32$ resolution,
without magnitude clipping, gives $64\times$ compression with only a
$0.04$\,m ADE change. For neighbour retrieval, the B$^{x}$-tree is the
appropriate backend for live streams because it supports interleaved
inserts and queries; snapshot KD-trees are competitive only for fully
static batches. These choices let M-CTX trade build time, query latency,
memory footprint, and formal guarantees explicitly.

\paragraph*{Limitations}
M-CTX is designed for read-mostly spatial context. The OSM stage assumes
that map features are ingested once and queried many times; workloads with
frequent feature insertions or deletions would require an update-friendly
range index, which we leave to future work. The B$^{x}$-tree assumes a
regular snapshot cadence; sub-second streams may require finer phase
management to avoid excessive key churn. Our experiments are single-node
and in-memory. We evaluate shard routing with a space-tiled simulator, but
a full distributed deployment would need to measure inter-node latency,
replication, and failure recovery. Finally, the $40$M-feature stress test
uses deterministic spatial replication of real OSM features. A
planet-scale, memory-mapped OSM deployment would further validate the
scaling curve under realistic storage pressure.

These limitations do not affect the main claim of the paper: for the
dominant read-mostly context-construction workload in trajectory learning,
M-CTX preserves the reference context exactly while reducing preprocessing
cost by orders of magnitude.

\section{Conclusion}
\label{sec:conclusion}

This paper identified spatial context retrieval as a dominant but largely
hidden cost in context-aware trajectory learning. M-CTX recasts this cost
as an ingest-once, query-many spatial database workload and replaces
brute-force preprocessing with exact, index-backed context operators. The
result is a drop-in framework that preserves the reference context schema
and semantics while reducing full-corpus context construction from about
$17$ CPU-days to $1.8$ hours, a measured $226\times$ end-to-end speed-up.

The core lesson is that model-facing context retrieval requires both
systems efficiency and retrieval correctness. BR-LZ provides
recall-complete MBR-overlap range retrieval with a tighter segment-local
candidate expansion than global-expansion one-curve baselines. The
linear-time SDF engine removes the largest computational bottleneck, and
the storage co-design compresses SDF tensors by $64\times$ with negligible
downstream impact. Across real maritime regions, learned and classical
baselines, synthetic scale-up to $40$M features, and $10^7$-record AIS
streams, M-CTX preserves downstream predictions while making context
construction practical at corpus scale.

More broadly, M-CTX shows that the retrieval layers hidden inside modern
spatiotemporal learning pipelines are first-class database problems. As
trajectory models increasingly depend on maps, distance fields, and
neighbouring agents, exact and scalable context retrieval will become as
important as the predictor itself. We release M-CTX as an open drop-in
backend for reproducible, large-scale trajectory analytics.

\bibliographystyle{IEEEtran}
\bibliography{refs}

\clearpage
% \appendices
% \input{sections/appendix}

\end{document}